\newtheorem{assumption}{Assumption}
\newtheorem{theorem}{Theorem}
\newcommand{\removelatexerror}{\let\@latex@error\@gobble}
\begin{document}
%
\title{Incorporating Hidden Layer representation into Adversarial Attacks and Defences}
%
%
%

\author{Haojing~Shen,
        xxx,
        xxx,~\IEEEmembership{~Member,~IEEE},
        xxx,~\IEEEmembership{~Fellow,~IEEE} \thanks{Haojing Shen, Sihong Chen, and Xizhao Wang are with Big Data Institute, College of Computer Science and Software Engineering, Guangdong Key Lab. of Intelligent Information Processing, Shenzhen University, Shenzhen 518060, Guangdong, China (Email: winddyakoky@gmail.com; 2651713361@qq.com; xizhaowang@ieee.org ).} \thanks{Ran~Wang is with the College of Mathematics and Statistics, Shenzhen University, Shenzhen 518060, China and also with the Shenzhen Key Laboratory of Advanced Machine Learning and Applications, Shenzhen University, Shenzhen 518060, China. (e-mail: wangran@szu.edu.cn).} \thanks{Corresponding author: Xizhao~Wang.}}

\maketitle

\begin{abstract}
	In this paper, we propose a defence strategy to improves adversarial robustness incorporating hidden layer representation. The key of this defence strategy aims to compress or filter input's information including adversarial perturbation. And this defence strategy can be regarded as an activation function which can be applied any kind of neural networks. We also prove theoretically the effectiveness of this defense strategy under certain conditions. Besides, incorporating hidden layer representation we propose three types of adversarial attacks to generate three types of adversarial examples, respectively. The experiments show that our defence method can significantly improve the adversarial robustness of deep neural networks which achieves the state-of-the-art performance even though we do not adopt adversarial training. 

\end{abstract}
\begin{IEEEkeywords}
Adversarial examples, adversarial attacks, adversarial defenses, Hidden layer representation.
\end{IEEEkeywords}

%
\IEEEpeerreviewmaketitle

\section{Introduction}

Recent deep neural networks (DNNs) make breakthroughs in many areas such as computer vision, speech recognition and so on. The power of DNNs brings people infinite reverie. Many significant and interesting works on DNNs bursting out. There are many kinds of DNNs as such convolution neural network (e.g. LeNet \cite{45_lecun1998gradient}, AlexNet \cite{33_krizhevsky2012imagenet}, ResNet \cite{42_he2016deep}), recurrent neural network (e.g. LSTM \cite{46_olah2015understanding}). As the rapid development of DNNs, more and more people focus on the security of DNNs. In particular, creating adversarial examples and defending adversarial attack are crucial techniques in the security of DNNs.

The vulnerability \cite{2_szegedy2013intriguing} of DNNs receives great attention since it has been found. Many works are focusing on this topic in literature. These researches can be roughly classified into two categories, such as adversarial attacks \cite{2_szegedy2013intriguing,6_goodfellow2014explaining,7_kurakin2016adversarial,8_papernot2016limitations,9_moosavi2016deepfool,10_carlini2017towards,11_chen2017zoo,12_moosavi2017universal,13_su2019one,14_zhao2017generating,15_tabacof2016adversarial} and adversarial defences \cite{2_szegedy2013intriguing,24_madry2017towards,43_tramer2017ensemble,44_papernot2016distillation,50_huang2015learning,51_cheng2020cat,52_song2018improving,53_zhang2019theoretically}. The methods of adversarial attack always want to create the adversarial examples which can fool the DNNs. Furthermore, the adversarial examples are usually imperceptible to a human. Inversely, adversarial defences make DNNs more robustness to adversarial examples. The adversarial attacks and adversarial defences just like a zero-sum game.

Many works follow such definition of adversarial example that adversarial example is very close to original natural images but fool the DNNs \cite{2_szegedy2013intriguing}. In the context of this definition, an adversarial example is imperceptible to human (We call it imperceptible adversarial example). However, there are some other forms of adversarial examples. For example, some researches add some patches to the original image for fooling the DNNs \cite{16_brown2017adversarial,17_liu2018dpatch,18_karmon2018lavan,19_lee2019physical}. These examples are called adversarial patches which can fool the DNNs though they are perceptive to human. Moreover, the adversarial noise images manipulated willfully by humans can fool the DNNs \cite{4_nguyen2015deep}. Though these noise images are no meaning for human, DNNs still classify them as a certain class with high confidence (we call them as unrecognizable adversarial examples). It is worth-noted that the above works mainly focus on image perturbations designed to produce mistake class label. Therefore, Sabour et al. \cite{55_DBLP:journals/corr/SabourCFF15} concentrate on the hidden layers of DNN representations and propose a method to construct an adversarial example which is perceptually similar to one image, but its hidden layer representation appears remarkably similar to a different image, one from a different class. Their works raise questions about DNN representation, as well as the explanation of adversarial examples. It is worth further study how adversarial disturbance is represented in the hidden layer and whether the information in the hidden layer can be used to construct stronger adversarial defense strategy.

Based on above works, in this paper, we analyze the characteristics of different types of adversarial examples. And we find that whether the sample is an imperceptible adversarial example, a adversarial patch or unrecognizable adversarial examples, there is an uncertainty (correct or incorrect) in the prediction results for the machine. But for human beings, as long as the disturbance does not change the original objective reality, human judgment is basically not wrong.
We emphasize that it is significantly different between human vision and DNNs to recognize the images. The human recognizes an image in pixel space while the machine does that in a special space (hidden layer) which is a machine '' vision'' space. Therefore, we propose a conjecture that, without changing the objective reality, the influence of adversarial disturbance on the prediction results depends on whether the original characteristic semantics are changed in the hidden layer propagation process.

Based on this assumption, we propose a defence mechanism, which corporating hidden layer representation to improve the adversarial robustness of the model. Our defence strategy will compress the information of input and filter many of that, including the adversarial perturbation. In fact, the proposed defense method can be regarded as an activation function, which acts in the hidden layer of the neural network and plays a role in improving the adversarial robustness of the model. And we theoretically certify the effectiveness of the defence under a simple case. Besides, we provide a certain defensive function and empirically verify the effectiveness of our defence mechanism. It is noteworthy that our defensive model achieves state-of-the-art performance. Finally, we discuss how to construct imperceptible adversarial examples, adversarial patches and unrecognizable adversarial examples by using the representation of hidden layers.

The contributions are listed as below:

\begin{enumerate}
	\item We discuss the characteristics of different types of adversarial examples (Imperceptible adversarial example, Adversarial patch and Unrecognizable adversarial example) and propose a conjecture that the representation of adversarial perturbation in hidden layer is much different with that of clean example. 
	\item Based on this conjecture, we propose an effective and efficient defence method to improve the adversarial robustness of the model. The proposed defence strategy will compress the information of input and filter many of that, including the adversarial perturbation. This defense strategy is also regarded as an activation function, and we prove the effectiveness of this defense strategy theoretically.
	\item Incorporating the representation in hidden layer, we provide three types of adversarial attack to generate imperceptible adversarial examples, unrecognizable adversarial examples and adversarial patches, which are shown in Fig. \ref{overview_123}.
	\item Our defence method, which achieves state-of-the-art performance, has $97.09\%$ accuracy under CW on MNIST, $89.03\%$ accuracy under CW on CIFAR10 and $64.38\%$ under CW on ImageNet. 
\end{enumerate}

In the rest of this paper, Section \ref{section2} introduce the related work in literature. In Section \ref{section3}, we disscuss three types of adversarial examples and propose a conjecture. In Section \ref{section4}, we propose a defence strategy and a certain defence function (activation function). In Section \ref{section5}, incorporating the representation in hidden layer, we introduce three advesarial attacks to generate three types of adversarial examples, respectively. In Section \ref{experiment}, we conduct large experiments and confirm that our methods are effective. Finally, Section \ref{conclusion} concludes the paper.



\section{Related works} \label{section2}
%

\subsection{Threat model}


Recently DNNs are powerful function and have made the advanced achievements in many domains such as Image Classify \cite{33_krizhevsky2012imagenet,34_ren2015faster}, NLP \cite{31_sutskever2014sequence,32_xu2016text}, Object Detection \cite{37_xie2017adversarial,16_brown2017adversarial,22_thys2019fooling}, Semantic classification \cite{36_hendrik2017universal,37_xie2017adversarial,38_fischer2017adversarial} and many more. With the burst spread of DNNs, more and more people pay attention to the security of DNNs. We may want to (or not want to) apply the DNNs depends on the extent to which the adversary generates the adversarial examples.


Natural language processing (NLP) make great progress in recent years. The model based on DNNs achieves state-of-the-art performance in various fields such as language modelling \cite{27_mikolov2011extensions,28_jozefowicz2016exploring}, syntactic parsing \cite{29_kiperwasser2016simple}, machine translation \cite{30_bahdanau2014neural,31_sutskever2014sequence} and many more. Then, the attention has transformed the risk of the DNNs in NLP. Recent work has shown \cite{35_papernot2016crafting} it is possible to generate adversarial examples by adding noises into texts, which fool the DNNs. This work firstly finds adversarial examples in texts, leading an arm begins between attack and defence in texts. 


In the context of object detection, recent works find various interesting adversarial examples \cite{37_xie2017adversarial}. For example, C. Xie et al. propose an algorithm called Dense Adversary Generation (DAG) to generate imperceptible adversarial examples. To decrease computing time, They utilize regional proposal network to produce the possible targets and then sum up the loss from these targets. Another form of adversarial examples in object detection is an adversarial patch \cite{16_brown2017adversarial} which generate a small image to stick it on the original image. Even these adversarial examples can attack the physical world (e.g. attacking Yolo \cite{22_thys2019fooling}).


In the semantic classification domain, the adversary is allowed manipulating fewer pixels than other domain since each perturbation is responsible for at least one-pixel segmentation \cite{36_hendrik2017universal,37_xie2017adversarial,38_fischer2017adversarial}. These attacks include non-targeted attacks and targeted attacks. However, they generate adversarial examples which are imperceptible to humans.


This paper focuses on image classification. The model based on DNNs makes excellent progress in image classification since Olga Russakovsky et al. created AlexNet \cite{39_russakovsky2015imagenet} which achieved champion in ILSVRC 2010. After that, many excellent DNNs (such as GoogleNet \cite{40_szegedy2015going}, VGG \cite{41_simonyan2014very}, ResNet \cite{42_he2016deep} etc.) are proposed in this domain. However, when people indulge in the feast of DNNs, it is found that the DNNs are incredibly vulnerable to adversarial examples \cite{2_szegedy2013intriguing}. The study of adversarial examples becomes significant in this domain. 


Usually, the successful rate of adversarial attack is related to how much distortion adversary can manipulate in original images. Szegedy et al. \cite{2_szegedy2013intriguing} use L2-norm to quantify the difference between adversarial images and original images. However, this metric is not necessarily applicable in other domain such as NLP.


According to the knowledge of adversary, adversarial attacks can be classified as white-box attacks and black-box attacks. In this paper, We suppose the adversary can access the detail of DNNs, including parameters and framework. With this strong assumption, we can construct aggressive adversarial examples and then also utilize them to black-box attacks since previous work \cite{43_tramer2017ensemble} find the adversarial examples have a property called transferability that perturbations crafted on an undefended model often transfer to an adversarially trained one.

\subsection{Adversarial example}


Szegedy et al. \cite{2_szegedy2013intriguing} firstly find the vulnerability of the DNNs. They show that original image added small perturbation, called adversarial example, can fool DNNs. They firstly describe searching adversarial examples as a box-constrained optimization problem: 

\begin{equation}
\begin{aligned}
\min \quad &\|r\|_2 \\
s.t. \quad &f(x+r) = l \\
&x+r\in [0,1]^m
\end{aligned}
\label{eq:bg:define}
\end{equation}


$f$ means a classifier (the DNN) mapping pixel space to a discrete category set. $x$ is the raw image and $r$ is the perturbation which is limited in $[0,1]$. Moreover, $l$ is a targeted label. Solving this formula, we can construct the adversarial example which the model classifies $x+r$ as $l$.

This intriguing property rises significant attention on the security of the DNNs. Then, a significant number of works \cite{2_szegedy2013intriguing,6_goodfellow2014explaining,7_kurakin2016adversarial,8_papernot2016limitations,9_moosavi2016deepfool,10_carlini2017towards,11_chen2017zoo,12_moosavi2017universal,13_su2019one,14_zhao2017generating,15_tabacof2016adversarial} try to generate all kinds of adversarial examples which can fool the DNNs such as Fast Gradient Sign Method \cite{6_goodfellow2014explaining}, Basic Iterative Method \cite{7_kurakin2016adversarial}, Jacobian-based Saliency Map Attack \cite{8_papernot2016limitations}, DeepFool \cite{9_moosavi2016deepfool}, CW \cite{10_carlini2017towards}, PGD \cite{24_madry2017towards}, One Pixel Attack \cite{13_su2019one} and so on. Goodfellow et al. \cite{6_goodfellow2014explaining} firstly propose based-gradient attack, which updates along the direction of the signal function of the pixel gradient to obtain the adversarial examples. It is a fast method since it updates one step. Based on FGSM, Kurakin et al. \cite{7_kurakin2016adversarial} propose a more powerful attack method which is multi-update to generate adversarial examples. Also using multiple iterations, Deepfool \cite{9_moosavi2016deepfool} utilize a linear approximation method to produce adversarial examples by searching the minimum distance from a clean example to an adversarial example. Looking for more powerful and non-gradient based attack method, Carlini et al. \cite{10_carlini2017towards} propose various objective functions and distance metrics to generate adversarial examples which can disable the Distillation Defense \cite{44_papernot2016distillation}. PGD \cite{24_madry2017towards} is the first-order adversary which is the most potent attack method among the first-order attack methods. Madry et al. \cite{24_madry2017towards} study the adversarial robustness of DNNs and utilize projected gradient descent to search for more aggressive adversarial examples. Their method can significantly improve resistance to most of the adversarial attacks. However, most of the researches search the adversarial examples follow the definition of adversarial examples \cite{2_szegedy2013intriguing} which are imperceptible to humans but can fool DNNs. Those researches focus on studying the small perturbation. 


Some particular adversarial examples have been found as the development of adversarial examples. Brown et al. \cite{16_brown2017adversarial} firstly introduce adversarial patch attacks for image classifiers. They design a small patch than the size of the original image and put the small patch into the original image. Those adversarial examples are called adversarial patches those that are applied in object detection \cite{17_liu2018dpatch,18_karmon2018lavan,20_sharif2016accessorize,21_eykholt2018robust}. Moreover, many works find that adversarial patches exist in the physical world \cite{3_liu2019perceptual,19_lee2019physical,22_thys2019fooling,23_chen2018shapeshifter}. The adversarial patch is not imperceptible to humans. And the perturbation usually does not confuse humans judge. However, it can fool the DNNs.


Anh et al. \cite{4_nguyen2015deep} find that some unrecognizable for humans can be classified as a class with high confidence by DNNs. Based evolutionary algorithms algorithm, they proposed a new algorithm called the multi-dimensional archive of phenotypic elites MAP-Elites \cite{25_cully2014robots}, enable them to evolve the population better. Sara et al. \cite{1_sabour2015adversarial} show that the representation of the raw image in a DNN can be operated to approximate those of other natural images by adding a minor, imperceptible perturbation to the original image. They focus on the internal layers of DNN representations. They create an adversarial example which approximated the original image, but its internal representation appears remarkably different from the original image. These researches are of great significance and raise some questions about DNN representations, as well as the vulnerability of DNN. 

\subsection{Adversarial Defense}

Adversarial defence \cite{2_szegedy2013intriguing,24_madry2017towards,43_tramer2017ensemble,44_papernot2016distillation,50_huang2015learning,51_cheng2020cat,52_song2018improving,53_zhang2019theoretically} makes DNNs more robust to adversarial examples. Papernot et al. \cite{44_papernot2016distillation} propose Defensive Distillation to defence DNN against adversarial examples. By controlling the temperature in the distillation network, they train a DNN with hard-label and train other DNN with soft-label. And the two networks have the same framework. However, this defence will fail under CW \cite{10_carlini2017towards} attack method. Adversarial training is one of the most effective defence methods. Szegedy et al. \cite{2_szegedy2013intriguing} and Madry et al. \cite{24_madry2017towards} use their attack method to generate adversarial examples. Then combining clean examples and adversarial examples, they utilize them to train a DNN, which can significant improve adversarial robustness of the DNN. However, such defence method will fail under more powerful attack methods and is time-consuming. Tramer et al. \cite{43_tramer2017ensemble} generate adversarial examples by various adversarial attacks and propose adversarial ensemble training which augments training data with perturbations transferred from other models. However, recent work \cite{54_athalye2018obfuscated} points out that adversarial training would cause obfuscated gradients that lead to a false sense of security in defences against adversarial examples. They propose three types of obfuscated gradients and design adversarial attacks that successfully attack all defensive papers in ICLR 2018 except that for CW \cite{10_carlini2017towards}.

%
%
%



\section{Adversarial Example Generation} \label{section3}
\begin{figure}[ht]
	\centering
	\subfloat[]{
		\includegraphics[width=0.8\linewidth]{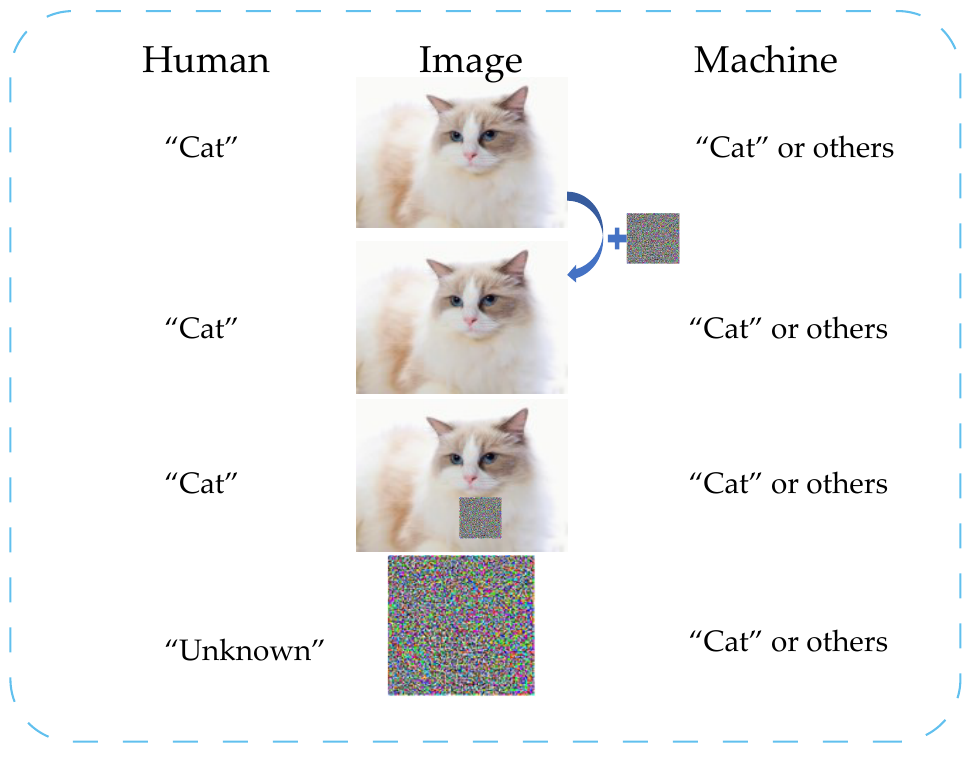}
	\label{img:human_machine_all_1}}
	
	\subfloat[]{
		\includegraphics[width=0.8\linewidth]{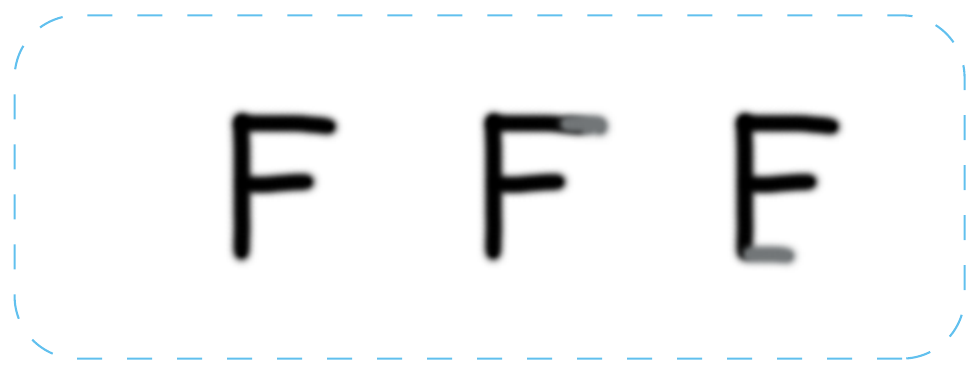}
	\label{img:human_machine_all_2}}
	
	\caption{(a) The difference between human and machine in recognition. (b) An example of perturbation changing the objective reality (For better understanding, we exaggerate the effect of the perturbation). Left: That is a English Word called "F". Middle: An image is generated by combining left picture and small perturbation. The perturbation does not change the objective reality of left picture. Right: An image is generated by combining left picture and small perturbation. However, the perturbation does change the objective reality of the original picture.}
	\label{img:human_machine_all} 
\end{figure}

In this section, to begin, we introduce three existed types of adversarial attack. These three types of adversarial attack include most of the current adversarial attack methods. By analyzing the characters of these attacks, we propose an assumption. Based this assumption, we try to explain the existent of adversarial examples in DNNs and propose three adversarial attacks and defences.

%

Many works follow the definition of adversarial examples drawn by Szedegy et al. \cite{2_szegedy2013intriguing}. We refer to this type of adversarial examples as imperceptible adversarial examples which is very close to the original image but can fool the DNN. Moreover, Nguyen et al. \cite{4_nguyen2015deep} find a new type of adversarial examples which are unrecognizable to human. We refer to it as unrecognizable adversarial examples that are hard to understand for human but is classified by a DNN with high confidence. Another type of adversarial examples is adversarial patches which are created by adding some small patches to original images \cite{16_brown2017adversarial}. These adversarial examples have a familiar character that can fool the DNN, but their original images are classified correctly by DNNs. The difference between these adversarial examples is the magnitude of change in input. The details are shown below:

\begin{itemize}
	\item Imperceptible adversarial examples: allows the adversary to change the whole image limiting in bounded-norm.
	\item Unrecognizable adversarial examples: allows the adversary to change the whole image without limiting, but the adversarial examples after adding perturbation should be unrecognizable to human.
	\item Adversarial patches: allows the adversary to change the image in a confined region.
\end{itemize}

First of all, let's think about the problem: If an example of A class become an example of B class after adding perturbation and the DNNs misclassify it, is it an adversarial example? we argue that the adversary cannot change the objective reality since all machine tasks are in the service of humanity, and the humans understand the world according to the objective reality. In practice, an adversarial example usually consists of a clean sample with a slight perturbation, e.g., imperceptible adversarial sample. It is this imperceptibility that makes it difficult for the perturbation to change the objective reality of the original example. However, the adversarial patch does not take into account this imperceptibility, which may cause the sample to become another type of sample after adding disturbance. For example, as shown in Fig. \ref{img:human_machine_all_2}, the left subfigure is an English word, called ''F''. We assume it is an objective reality (e.g. human and machine all called it as ''F''). Then we get medial subfigure by adding little pixel in the left subfigure. These perturbation does not bother humans recognition in pixel space, but the machine may be confused with other words in some feature space. We consider it adversarial example if the model classifies it wrongly. Moreover, the right subfigure also made by adding little pixel in the left subfigure. We find that in pixel space it becomes another word which is called ''E''. It is meaningless that we call a sample as an adversarial example in this case if the model classifies it wrongly.

Secondly, it is obvious that we can perceive the perturbation of adversarial patches and unrecognizable adversarial examples. Furthermore, for imperceptible adversarial examples, if the limitation of perturbation is relaxed enough, we also can perceive the small perturbation. We point out that it is significantly different between human vision and DNNs to recognize the images. It is a common phenomenon that the perturbation that hard to recognize by human but may be easy to recognize for a DNN. Inversely, the perturbation such as adversarial patches that are easy to recognize by human but maybe are hard to recognize for a DNN. As shown in Fig. \ref{img:human_machine_all_1}, we find that when humans identify clean or disturbing images, the answer is always unique, whereas DNN cannot give a definite answer. If the machine misidentified the disturbing image, such a sample is known as an adversarial example. Besides, if an adversarial example is imperceptible to human, the ''imperceptibility'' is defined in pixel space.

By summarizing the adversarial examples above, we draw the following conclusions about the adversarial examples:
\begin{itemize}
	\item The adversarial example cannot change the objective reality.
	\item The imperceptibility of the adversarial example is relative.
\end{itemize}
Therefore, we have the follow assumption:
\begin{assumption} \label{assumption}
	Without changing the objective reality, the influence of adversarial disturbance on the prediction results depends on whether the original characteristic semantics are changed in the hidden layer propagation process.
\end{assumption}

 With this opinion, an effective adversarial perturbation means that it can affect the prediction result through the propagation of hidden layer without changing the original label in pixel space.

\section{Defense Mechanism} \label{section4}
\begin{figure*}[ht]
	\centering
	\includegraphics[width=1.0\linewidth]{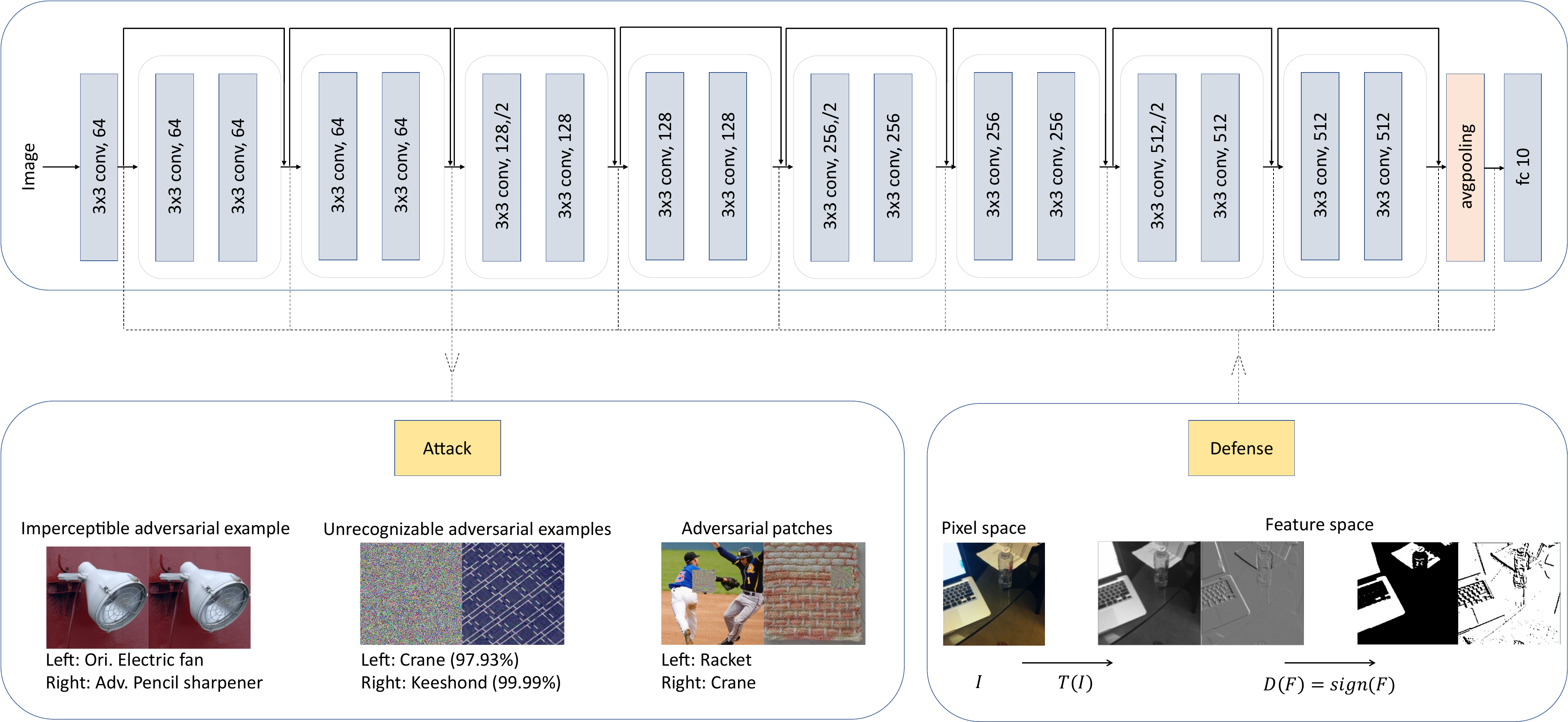}
	
	\caption{An overview of our attacks and defending strategy based the definition of generalized adversarial examples. Let Resnet18 \cite{42_he2016deep} as example. In Resnet18, we do not manipulate in residual block. We search adversarial examples (\textit{Imperceptible adversarial examples, Unrecognizable adversarial examples and adversarial patch}) by controlling the feature between residual blocks. And when defending, we add our defending strategy (e.g. \textit{Sign function}) after the first convolution which achieves the best performance. As shown in the lower right corner of the figure, the front part of the network performs feature extraction on pixel space. And the feature extraction is redundant. And our defense strategy can compress information so much that in the process we lose some information, including perturbations.}
	\label{img:overview_attack_defense} 
\end{figure*}

In this section, we introduce a defence mechanism against adversarial examples based on our assumption. In our hypothesis, the adversarial disturbance propagates through the hidden layer and finally affects the prediction results. Therefore, it is a natural idea to prevent the propagation of adversarial disturbance in the hidden layer by setting special operation in the hidden layer, so as to achieve the purpose of improving the adversarial robustness.

We use the notation from Carlini et al. \cite{10_carlini2017towards}: define $F$ to be the full neural network including the softmax function, $Z(x)=z$ to be the output of all layers except the softmax, and
\begin{equation}
 F(x) = softmax(Z(x)) = y
\end{equation}
A neural network typically consists of layers
\begin{eqnarray}
F = softmax \circ F_n \circ F_{n-1} \circ \dots \circ F_1
\end{eqnarray}
where $F_i(x)=\sigma (W_i \cdot x) + b_i$ for some activation function $\sigma$, some model weights $W_i$, and some model biases $b_i$. 

According to our assumption that adversarial perturbation can be propageted through the hidden layer, if we can prevent the disturbance from propagating forward, then the model will naturally be adversarial robust. The simple idea is to design a filter like a function and then deploy it between hidden layers. The purpose of this function is to filter and compress input information. Because DNNs usually have multiple convolution kernels, the extracted features are redundant, which provides feasible conditions for this design scheme. 

We suppose that there is a function $D$ which satisfies above characters. When $D$ compresses the input information, it would lose some of the original information, including adversarial perturbation. However, because the input information is redundant, this function $D$ does not have much impact on the performance of the model. Therefore, we can construct the defence strategy as below:
\begin{equation}
\begin{split}
FD^i(x) &= softmax(Z^i(x)) \\
&= softmax \circ F_n \circ F_{n-1} \circ \dots \circ F_{i+1} \circ D \\
&\circ F_{i} \circ \dots \circ F_1
\end{split}
\end{equation}
where $i$ means that the defence strategy ($D$) is deployed in the front of $i$-th hidden layers.

Next, we discuss two property of $D$. Firstly, $D$ must be sufficient smoothness. The training of neural network depends on the calculation of gradient. If the function $D$ is not differentiable, then the whole network is not differentiable, and then network training becomes a difficult task. Therefore, in order not to change the original training mechanism of the network, we hope that the function $D$ is uniform continuity or uniform continuity in segments. Secondly, the range of $D$ is a denumerable finite set. In traditional DNNs, the input and output are real numbers and belong to an uncountable set. The critical problem of adversarial defence is how to prevent the small perturbations slowly amplifying in a DNN. If the interval of the domain of $D$ contains the interval of the range, the perturbations will be compressed. This compression naturally leads to the loss of information. However, it is this compression effect that makes the impact of small perturbations on the results even smaller.

Let $A=F_i \circ \cdots \circ F_1$ and $B=F_n \circ \cdots \circ F_{i+1}$. The function $A$ is the front part of the DNNs. The function $B$ is the back part of the DNNs. Then, we have the following theorem:
\begin{theorem} \label{Theorem:1}
	Suppose $F_j(x) = W_j x + b_j, j=1,2,\cdots , i$, $D$ satisfies the \textit{Lipschitz continuity condition}, then $D\cdot A$ also satisfies the \textit{Lipschitz continuity condition}. Existing a constant $K$, we have
	$$
	\|D(A(x+\Delta)) - D(A(x))\|_2 \leq K\cdot \|W_i W_{i-1} \cdots W_1\|_2 \cdot \| \Delta\|_2
	$$
\end{theorem}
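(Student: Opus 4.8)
The plan is to exploit the two structural facts at hand: $f$ is a linear map, and a composition of Lipschitz maps is again Lipschitz. Once the Lipschitz constant of the linear part is identified, the statement collapses to a short chain of inequalities, so the bulk of the work is just bookkeeping of constants.

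First I would record the hypothesis quantitatively: since $D$ satisfies the Lipschitz continuity condition there is a constant $K_D>0$ such that $\|D(\boldsymbol{a})-D(\boldsymbol{b})\|_2 \le K_D\,\|\boldsymbol{a}-\boldsymbol{b}\|_2$ for all $\boldsymbol{a},\boldsymbol{b}\in\mathcal{F}$. Next I would treat $f$: because $f(\boldsymbol{x})=\boldsymbol{W}\boldsymbol{x}$ with $\boldsymbol{W}$ an $m\times n$ matrix, linearity gives $f(\boldsymbol{u})-f(\boldsymbol{v})=\boldsymbol{W}(\boldsymbol{u}-\boldsymbol{v})$, hence $\|f(\boldsymbol{u})-f(\boldsymbol{v})\|_2 \le \|\boldsymbol{W}\|\,\|\boldsymbol{u}-\boldsymbol{v}\|_2$, where $\|\boldsymbol{W}\|=\sigma_{\max}(\boldsymbol{W})$ is the operator (spectral) norm of $\boldsymbol{W}$ — a finite number, bounded for instance by the Frobenius norm $\sqrt{\sum_{i,j}W_{ij}^{2}}$. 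Thus $f$ is Lipschitz with constant $K_f:=\|\boldsymbol{W}\|$. Then I would compose the two estimates: taking $\boldsymbol{u}=D(\boldsymbol{F}+\boldsymbol{\Delta})$ and $\boldsymbol{v}=D(\boldsymbol{F})$,
$$\|f(D(\boldsymbol{F}+\boldsymbol{\Delta}))-f(D(\boldsymbol{F}))\|_2 \;\le\; K_f\,\|D(\boldsymbol{F}+\boldsymbol{\Delta})-D(\boldsymbol{F})\|_2 \;\le\; K_f K_D\,\|(\boldsymbol{F}+\boldsymbol{\Delta})-\boldsymbol{F}\|_2 \;=\; K_f K_D\,\|\boldsymbol{\Delta}\|_2,$$
which is exactly the assertion with $K:=K_f K_D=\|\boldsymbol{W}\|\,K_D$. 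The same chain, applied to arbitrary inputs rather than a base point and a perturbation, shows $f\cdot D$ is globally Lipschitz on $\mathcal{F}$, establishing the first sentence of the statement as well.

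The computation itself is routine; the only genuine subtlety lies in the hypothesis. If instead one works with the ``uniform continuity in segments'' form of $D$ motivated by the design — a quantizer- or sign-type function whose range is a finite set — then $D$ is not globally Lipschitz, and the argument must be localized: partition $\mathcal{F}$ into the finitely many cells on which $D$ is constant, note the bound is trivial (the left-hand side vanishes) when $\boldsymbol{F}$ and $\boldsymbol{F}+\boldsymbol{\Delta}$ lie in the same cell, and handle the boundary-crossing case separately, absorbing the jump sizes into $K$. Reconciling the stated Lipschitz hypothesis with the finite-range property promised earlier is where care is needed; for the theorem precisely as phrased, however, the Lipschitz constant $K_D$ is taken as given and the proof is the three-line composition displayed above.
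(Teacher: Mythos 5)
Your proposal is correct and follows essentially the same route as the paper's proof: compose the Lipschitz bound on $D$ with the operator-norm bound $\|\boldsymbol{W}(\boldsymbol{u}-\boldsymbol{v})\|_2 \le \|\boldsymbol{W}\|_2\|\boldsymbol{u}-\boldsymbol{v}\|_2$ for the linear classifier, yielding $K = \|\boldsymbol{W}\|_2 K_D$. Your closing remark is also well taken: the concrete $D$ the paper later adopts (the sign/quantizer function) is discontinuous and hence not Lipschitz, a tension the paper's proof does not address, though it does not affect the theorem as literally stated.
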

\begin{proof}
	Since $F_j(x) = W_j x + b_j, j=1,2,\cdots , i$, we have
	\begin{equation}
	\begin{split}
	A(x) &= F_i \cdot F_{i-1} \cdots F_1(x) \\
	&= F_i \cdot F_{i-1} \cdots F_2(W_1 x + b_1) \\
	&= F_i \cdot F_{i-1} \cdots F_3(W_2 W_1 x + W_2 b_1 + b_2) \\
	&= \cdots \\
	&= W_i W_{i-1}\cdots W_1 x + W_i W_{i-1} \cdots W_2 b_1 + \cdots W_i b_{i-1} + b_i
	\end{split}
	\end{equation},
	and 
	\begin{equation}
	\begin{split}
	A(x+\Delta) &= F_i \cdot F_{i-1} \cdots F_1(x+\Delta) \\
	&= F_i \cdot F_{i-1} \cdots F_2(W_1 x + b_1 + W_1 \Delta) \\
	&= F_i \cdot F_{i-1} \cdots F_3(W_2 W_1 x + W_2 b_1 + b_2 + W_2 W_1 \Delta) \\
	&= \cdots \\
	&= W_i W_{i-1}\cdots W_1 x + W_i W_{i-1} \cdots W_2 b_1 + \\
	&\quad \cdots W_i b_{i-1} + b_i + W_i W_{i-1} \cdots W_1 \Delta
	\end{split}
	\end{equation}
	
	$D$ satisfies the \textit{Lipschitz continuity condition}, then we have
	\begin{equation}
	\begin{split}
	\|D(A(x+\Delta)) - D(A(x))\|_2 & \leq K \cdot \| A(x+\Delta)-A(x) \|_2 \\
	&= K \cdot  \|W_i W_{i-1} \cdots W_1 \Delta\|_2 \\
	&= K \cdot \|W_i W_{i-1} \cdots W_1 \|_2 \cdot \| \Delta \|_2 
	\end{split}
	\end{equation}
	Proof done.
\end{proof}

In Theorem \ref{Theorem:1}, we give a strong supposition that $D$ satisfies the \textit{Lipschitz continuity condition}, leading $D \cdot A$ also satisfies the \textit{Lipschitz continuity condition}. With this conclusion, the effect of perturbation can be under-control within $K \cdot \|W_i W_{i-1} \cdots W_1 \|_2 \cdot \| \Delta \|_2$ where $K \cdot \|W_i W_{i-1} \cdots W_1 \|_2$ is a constant for a trained DNN. It is easy to infer that as $K \cdot \|W_i W_{i-1} \cdots W_1 \|_2 \rightarrow 0$, the adversarial perturbation would be filtered out after $D$. But for a trained model, the $\|W_i W_{i-1} \cdots W_1 \|_2$ is fixed. Therefore, we suggest that the $\|W_i W_{i-1} \cdots W_1 \|_2$ can be used to measure the adversarial robustness of the model.

According to Theorem \ref{Theorem:1}, we know that when there is no nonlinear activation function in the front part of the network, the disturbance of the input can be limited to a finite range ($K \cdot \|W_i W_{i-1} \cdots W_1 \|_2$). Therefore, we propose a more general defense model:

\begin{equation}
\begin{split}
FD^*(x) &= softmax(Z^i(x)) \\
&= softmax \circ F_n \circ F_{n-1} \circ \dots \circ F_{i+1} \circ \\
& D \circ Z_{i} \circ D \circ Z_{i-1} \circ \dots \circ D \circ Z_1(x)
\end{split}
\end{equation}
where $Z_i$ is the output of the $i$-th layer without activate function, $F_i=\sigma \circ Z_i$. In essence, we have replaced the activation function in the previous part of the network with the $D$ function. Therefore, we treat $D$ as a special activation function. 

Next, we will show how to design a reasonable function $D$. For satisfying the above two property, and we propose a specific function:
\begin{equation}
D(\boldsymbol{x}) = sign(\boldsymbol{x}) = \begin{cases}
1, &\quad \boldsymbol{x}_i \geq 0 \\
-1, &\quad \boldsymbol{x}_i < 0
\end{cases}
\label{eq:D}
\end{equation}
Since Eq. \ref{eq:D} is bounded, $D$ satisfies \textit{Lipschitz continuity condition}. And the range of $D$ is $\{1,-1\}$. In section \ref{experiment}, we will show that applying Eq. \ref{eq:D} can significantly improve the adversarial robustness of the model at the same time, keeping accuracy on clean examples.

By observing Eq. \ref{eq:D}, it is found that the equation is a piecewise differentiable function whose non-differentiable point is the origin. The defence function $D$ is too simplistic and extreme. Because the input is all integers or all negative Numbers, the model loses all available information through this function, resulting in network convergence failure. In the experiment section \ref{experiment}, we will introduce that the deeper the network layer where function $D$ (Eq. \ref{eq:D}) is located, the worse the performance of the network under clean samples. Therefore, based on equation Eq. \ref{eq:D}, we propose a more general defence function as following:
\begin{equation}
D(\boldsymbol{x}) = sign(\boldsymbol{x}) = \begin{cases}
b_u, &\quad \boldsymbol{x}_i \in [a_u, +\infty] \\
..., &\quad \boldsymbol{x}_i \in [...,...] \\
b_l, &\quad \boldsymbol{x}_i \in [a_l, -\infty]
\end{cases}
\label{eq:D-1}
\end{equation}
where $b_u$ is the upper bound, $b_l$ is the lower bound, and $\{a_u,..,a_l\}$ are the non-differentiable points. The details are shown in Algorithm \ref{alg:defense}. And the overview are shown in Fig. \ref{img:overview_attack_defense}.

\begin{algorithm}[]
	\caption{Defense Algorithm}
	\begin{algorithmic}[1]
		\REQUIRE An dataset $\mathcal{D}=\{(\boldsymbol{x}^{(i)}, y^{(i)})\}_{i=1}^N$; A network $FD^l=softmax \circ F_n \circ \cdots \circ F_{l-1} \circ D \circ F_l \cdots \circ F_1$
		\STATE Initialize hyperparameter including learing rate, epoches, optimizer, batchsize $m$.
		\FOR {i=1 to epoches}
		\FOR {$k$ steps}
		\STATE Sample $m$ samples $d=\{\boldsymbol{x}^{(1)},\boldsymbol{x}^{(2)},\cdots, \boldsymbol{x}^{(m)}\}$
		\STATE Update the parameters by descending its stochastic gradient:
		$$\bigtriangledown \frac{1}{m}\sum_{i=1}^{m} {y^{(i)} \cdot log(FD^l (\boldsymbol{x}^{(i)}))}$$
		\ENDFOR
		\ENDFOR
		
		\STATE Output $FD^l$
	\end{algorithmic}
	\label{alg:defense}
\end{algorithm}

\section{Attack Mechanism} \label{section5}

In this section, we introduce how to generate imperceptible adversarial examples, adversarial patches and unrecognizable adversarial example by incorporating hidden layer representation in a DNN. The key idea is maximizing distance between clean example and adversarial example in hidden layer.

In Section \ref{section4}, a fully neural network consists of layers
\begin{equation}
\begin{split}
F(x) = softmax \circ B \circ A(x)
\end{split}
\end{equation}
where $A=F_i \circ \cdots \circ F_1$ and $B=F_n \circ \cdots \circ F_{i-1}$. Then, adversarial example can be easily generated as follows:
\begin{equation}
\begin{aligned}
\max \quad &\| A(x+\Delta) - A(x)\|_p \\
s.t. \quad &F(x+\Delta) \not= y
\label{eq:attack_original}
\end{aligned}
\end{equation}
It is noteworthy that the adversarial example generated by Eq. \ref{eq:attack_original} is unknown, e.g, the pixel value can be any value. It is significantly different with imperceptible adversarial example, adversarial patch and unrecognizable adversarial example. However, we can construct these three types of adversarial example by adding additional restricted conditions. For example, Imperceptible Adversarial Example can be generated by the following formulation:
\begin{equation}
\begin{aligned}
\min \quad &\|\Delta\|_p - \alpha \cdot \| A(x+\Delta) - A(x)\|_p \\
s.t. \quad &F(x+\Delta) \not= y \\
&\|\Delta\|_p \leq \delta
\label{eq:attack-v1}
\end{aligned}
\end{equation}
where $\alpha$ is the hyperparameter and $\delta$ is the small value for humans perception. Eq. \ref{eq:attack-v1} try to find an adversarial example which is imperceptible to human by minimizing $\boldsymbol{\delta}$ and maximize $\| A(x+\Delta) - A(x)\|_p$. 


The second adversarial attack, which generate unrecognizable adversarial example, makes a DNN classifies a natural unrecognized image for human or a natural image from other domain with high confidence. The detailed formula is described as below:
\begin{equation}
\begin{aligned}
\min \quad &\| A(x_{in}) - A(x_{out}+\Delta)\|_p \\
s.t. \quad &F(x+\Delta) \not= y
\end{aligned}
\label{eq:attack-v2}
\end{equation}
$x_{in}$ represents a natural image which comes from the original domain. And $x_{out}$ represents an image which comes from some domain or original domain. If $x_{out}$ is unrecognizable to human, then we can create an adversarial example which is unrecognizable for human but is classified by a DNN with high confidence. In that case, the attack is similar to the work \cite{4_nguyen2015deep} that produces images that are unrecognizable for humans, but a DNN believes to be recognizable objects with high confidence. If $x_{out}$ came from original domain, then this type attack is similar to the work \cite{1_sabour2015adversarial} which shows that the representation of a raw image in a DNN can be operated to approximate those of other natural images by adding a minor, imperceptible perturbation to the original image. Referring to \cite{1_sabour2015adversarial}, $x_{in}$ is guided image, and $x_{out}$ is the source image.

The third version is adversarial patch attack described as below:
\begin{equation}
\begin{aligned}
\min \quad &-\| A(x) - A(x+P_\Delta)\|_p \\
s.t. \quad &F(x+P_\Delta) \not= y \\
&P_{\Delta} is\ a\ small\ patch
\end{aligned}
\label{eq:attack-v3}
\end{equation}

In Eq. \ref{eq:attack-v3}, $P_{\boldsymbol{\Delta}}$ is a smaller patch than the original image. This attack allows an adversary to manipulate pixels in the confined region (e.g. $P_{\boldsymbol{\Delta}}$) but is no limit to choose the values yielding the max or min value. Moreover, $P_{\boldsymbol{\Delta}}$ has a specific shape such as square, circle and so on. In practical, when optimizing the Eq. \ref{eq:attack-v3}, we need to know where $P_{\boldsymbol{\Delta}}$ locate in and what size that is. Usually, the position and size of $P_{\boldsymbol{\Delta}}$ can be randomly initialized as long as the patch does not change the objective reality of the original image. Therefore, the size of $P_{\boldsymbol{\Delta}}$ always is much smaller than that of the original image.
\section{Experiments}\label{experiment}

In this section, we use Python 3.6 to conduct the proposed defence strategy and three types of adversarial attack methods. Using toolbox Advertorch \cite{49_ding2019advertorch}, we implement several adversarial attack methods as a baseline. Firstly, we evaluate the proposed defence strategy on MNIST \cite{47_deng2012mnist}, CIFAR10 \cite{48_krizhevsky2009learning} and IMAGENET \cite{39_russakovsky2015imagenet}. Then, for imperceptible adversarial examples, we compared our method with existed attack methods. For unrecognizable adversarial examples and adversarial patches, we generate various types of them using the proposed method.

MNIST dataset consists of 60,000 training samples and 10,000 samples, each of which is a 28x28 pixel handwriting digital image. CIFAR10 dataset is composed of 60,000 32x32 colour image, 50,000 for training and 10,000 for testing. ImageNet is large colour image dataset which consists of more than 1,400,000 natural images and 1000 classes. 

For any sample from MNIST, CIFAR10 and ImageNet, we make a preprocess on it, e.g., normalization described as below:
$$
\boldsymbol{I} = \frac{\boldsymbol{I}-\boldsymbol{\mu}}{\boldsymbol{\sigma}}
$$
where $\boldsymbol{\mu}$ is the mean pixel value of the whole dataset and $\boldsymbol{\sigma}$ is the standard deviation of that. The detail is showed in Table \ref{tab:initial}.

\begin{table}[ht]
	\centering
	\renewcommand\arraystretch{1.0}
	\caption{THE MEAN VALUE AND STANDARD DEVIATION IN MNIST, CIFAR10 AND IMAGENET}
	\label{tab:initial}
	\begin{tabular}{lllll}
		\toprule
		& $\boldsymbol{\mu}$                           & $\boldsymbol{\sigma}$                        &  &  \\ \midrule
		MNIST    & {[}0.1307{]}                 & {[}0.3081{]}                 &  &  \\ \midrule
		CIFAR10  & {[}0.4914, 0.4822, 0.4465{]} & {[}0.2023, 0.1994, 0.2010{]} &  &  \\ \midrule
		ImageNet & {[}0.485, 0.456, 0.406{]}    & {[}0.229, 0.224, 0.225{]}    &  &  \\ \bottomrule
	\end{tabular}
\end{table}

\subsection{Adversarial Defense}

\paragraph{MNIST} We use the part of convolution layers as function $T$ and use the part of linear layers as classifier $f$. We set epochs as 30. The learning rate is $0.01$ at the beginning, and then half every 10 epochs. The momentum is $0.95$. The batch size is $64$. The results of the experiment are shown in Tabel \ref{tab:defense_mnist}. We find that our defence strategy almost invalidates all adversaries. At the same setting, our defending method can effectively defence gradient-based attacks (such as PGD) or non-gradient-based attacks (such as CW). 

\begin{table}[ht]
	\centering
	\renewcommand\arraystretch{1.0}
	\setlength{\tabcolsep}{4pt}
	\caption{Defense on MNIST: Performance of our defending strategy against different adversaries for $\epsilon=0.3$}
	\label{tab:defense_mnist}
	\begin{tabular}{llll||l||l}
		\toprule
		Method & Steps                                      & $\epsilon$ & Natural                                         & Adv. Training \cite{24_madry2017towards}                                & Our                                             \\ \midrule
		Natural     & - & -       & 0.9849 & \textbf{0.9923} & 0.9711 \\
		FGSM        & -    & 0.3     & 0.8239      & \textbf{0.9795}     & 0.9712                                 \\
		L2BIA       & 100      & 0.3     & 0.9758      & 0.9503       & \textbf{0.9712}                                 \\
		PGD         & 100   & 0.3     & 0.5838        & 0.9507   & \textbf{0.9547}                 \\
		CW          & 100  & 0.3     & 0.6052     & 0.7813         & \textbf{0.9709} \\ \bottomrule                                
	\end{tabular}
\end{table}

\paragraph{CIFAR10} Considering a more complicated case, we use ResNet18 as our basic neural network to train on CIFAR10. We set epochs as $30$. The learning rate is $0.01$ at the beginning, and then half every 10 epochs. The momentum is $0.95$. The batch size is $64$. We try to insert function $D$ into ResNet18. The part of ResNet18 before $D$ is considered as $T$. Moreover, the part of ResNet18 after $D$ is considered as $f$. The results of the experiment are shown in Tabel \ref{tab:defense_cifar10}. As Table \ref{tab:defense_cifar10} showed, all adversaries fail to attack our model. 

\begin{table}[ht]
	\centering
	\renewcommand\arraystretch{1.0}
	\setlength{\tabcolsep}{4pt}
	\caption{Defense on CIFAR10: Performance of our defending strategy against different adversaries for $\epsilon=0.03$}
	\label{tab:defense_cifar10}
	\begin{tabular}{llll||l||l}
		\toprule
		Method & Steps                                      & $\epsilon$ & Natural                                         & Adv. Training \cite{24_madry2017towards}                              & Our                                             \\ \midrule
		Natural    & - & -    & 0.8892 & 0.8613 & \textbf{0.892} \\
		FGSM   & -    & 0.03     & 0.743    & 0.7151       & \textbf{0.8930}            \\
		L2BIA     & 100  & 0.03     & 0.8775     & 0.8595   & \textbf{0.8766}    \\
		PGD      & 100   & 0.03     & 0.4586  & 0.6943    & \textbf{0.8927}     \\
		CW & 100 & 0.03     & 0.3595  & 0.0185  & \textbf{0.8903}       \\ \bottomrule                         
	\end{tabular}
\end{table}

\paragraph{ImageNet}
In this part, we apply our defence method in ImageNet dataset. We adopt ResNet50 as our model and set epochs as $90$. Then learning rate is $0.01$ at the beginning and then is adjusted according to this formulation $lr=0.01 * 0.1^{\lfloor epoch/30\rfloor}$. The momentum is $0.95$. The batch size is $256$. The defensive function is deployed in the back of the first convolutional layer. The detail results of the experiments are shown in Table \ref{tab:defense_ImageNet}. As a result, all adversaries fail to attack our defensive model. It is noted that the accuracy under different attacks is almost the same with that on clean samples. It demonstrates that our defence method can successfully extend more realistic and complicated case.

\begin{table}[ht]
	\centering
	\renewcommand\arraystretch{1.0}
	\setlength{\tabcolsep}{4pt}
	\caption{Defense on ImageNet: Performance of our defending strategy against different adversaries for $\epsilon=0.03$}
	\label{tab:defense_ImageNet}
	\begin{tabular}{llll||l||l}
		\toprule
		Method & Steps                                      & $\epsilon$ & Natural                                         & Adv. Training \cite{24_madry2017towards}                              & Our                                             \\ \midrule
		Natural    & - & -    & \textbf{0.7198} & 0.4922 & 0.6453 \\
		FGSM     & -     & 0.03     & 0.05    & 0.0010       & \textbf{0.6443} \\
		PGD      & 100   & 0.03     & 0.0003  & 0.0010    & \textbf{0.6372}     \\
		CW & 100 & 0.03     & 0.0003  & 0.0009  & \textbf{0.6438}       \\ \bottomrule                         
	\end{tabular}
\end{table}

\paragraph{Analysis} We find that our defending method will decrease the accuracy of the model on clean samples. And all adversaries fail to attack our model. Therefore, if we can prevent the drop in the accuracy of the model on clean samples, the adversarial robustness of the model can be further improved. To search the best position where the function $D$ is, we do a simple test in the network layers of LeNet and ResNet18. As shown in Fig. \ref{ex:defense_mnist_cifar10}, we find that our defence strategy is best deployed at the front of the network. The further function $D$ is deployed, the lower the recognition rate of the model on clean samples, and the ability of the model to resist the attack decreases accordingly. 

Moreover, the recognition rate of our defence model for clean samples is very similar to that of the model for adversarial samples. This indicates that the adversaries have failed to attack the defence model. We present that function $D$ acts to compress the information. After the feature passes through the function $D$, much information is lost, including adversarial perturbation. In the latter part of the neural network, information is extracted step by step. If the function $D$ is placed in the back half of the neural network, a large amount of useful information will be lost, making the performance of the model worse.

\begin{figure}[ht]
	\centering
	\subfloat[LeNet network with defense strategy]{
		\includegraphics[width=6.65cm]{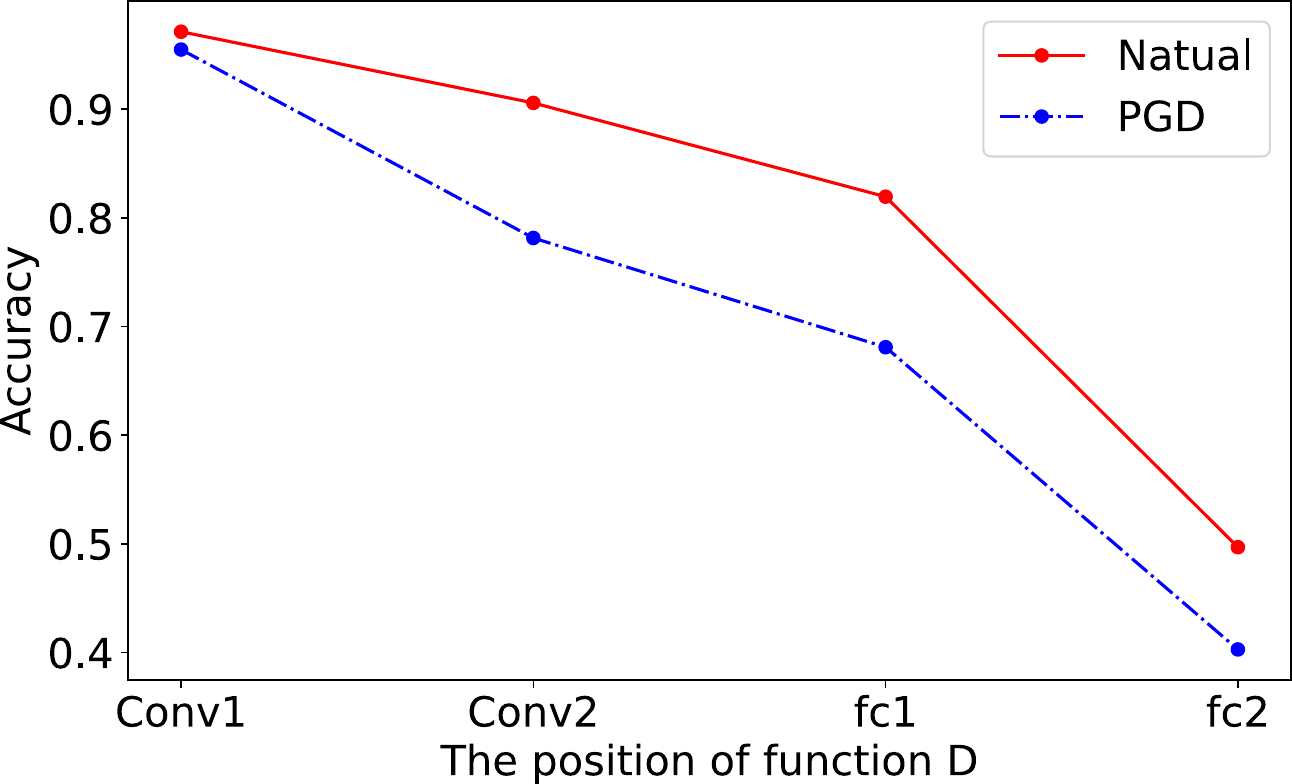}}
	
	\subfloat[Resnet18 network with defense strategy]{
		\includegraphics[width=7.0cm]{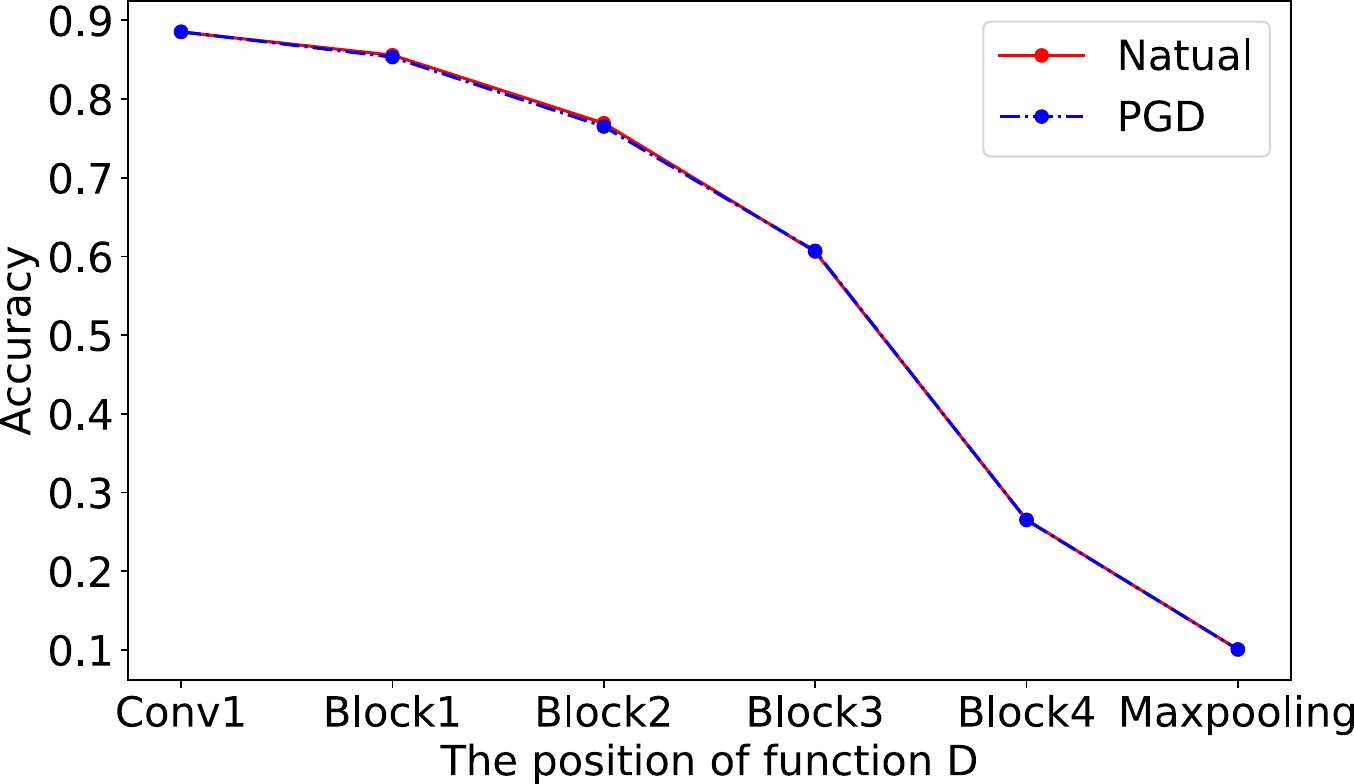}}
	
	\caption{The effect of setting location of function D on network performance. (a) Lenet have five layers include two convolutions and three connective layers. (b) ResNet18 have a convolution layer, a fully connective layer and four blocks.}
	\label{ex:defense_mnist_cifar10}
\end{figure}

Moreover, we analyze why function $D$ can effectively play a defensive role by visualizing the changes in the input found after convolution. We use the trained ResNet50 network to visualize the input, the output of the first layer of convolution, and the output after function $D$. Function $D$ is Eq. \ref{eq:D}. We compare the degree of contamination of inputs under different disturbances. As Fig. \ref{ex:visual_defense_conv} shown, when function $D$ filters the information, it filters the adversarial perturbations to some extent. When $\epsilon \leq 0.06$, function $D$ filters out adversarial perturbations almost entirely (Columns 2 and 3). In the case without function $D$, the input is heavily contaminated (column 1 and column 4).
\begin{figure}[ht]
	\centering
	\includegraphics[width=0.9\linewidth]{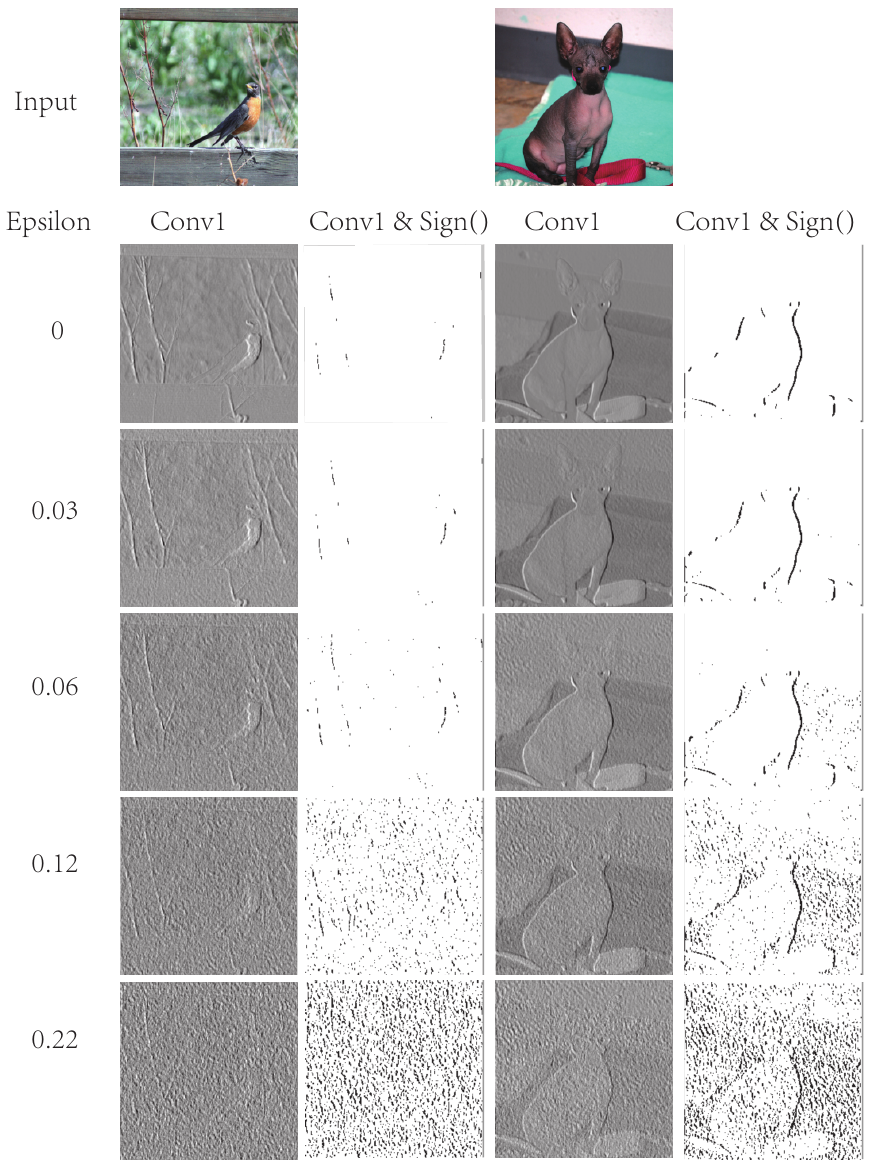}
	\caption{We use the trained ResNet50 network to visualize the input, the output of the first layer of convolution and the output after function $D$. Function $D$ is Eq. \ref{eq:D}. We compare the degree of contamination of inputs under different disturbances.}
	\label{ex:visual_defense_conv}
\end{figure}

\subsection{Imperceptible Adversarial Examples}

\begin{table}[ht]
	\centering
	\renewcommand\arraystretch{1.0}
	\setlength{\tabcolsep}{14pt}
	\caption{A comparison of imperceptible adversarial examples with various adversarial attacks on MNIST.}
	\label{tab:result_v1_mnist}
	\begin{tabular}{lllll}
		\toprule
		Method      & Steps & Epsilon & Accuracy        &  \\ \midrule
		Natural     & -     & -       & 0.9849          &  \\
		FGSM        & -     & 0.3     & 0.8239          &  \\
		PGD         & 100   & 0.3     & \textbf{0.5838} &  \\
		SinglePixel & -     & -       & 0.9556          &  \\
		CW          & 100   & 0.3     & 0.6052          &  \\
		LBFGS       & 100   & 0.3     & 0.9972          &  \\
		L2BIA       & 100   & 0.3     & 0.9758          &  \\ \midrule
		Our         & 100   & 0.3     & \textbf{0.5995} &  \\ \bottomrule
	\end{tabular}
	
\end{table}

\begin{table}[ht]
	\centering
	\renewcommand\arraystretch{1.0}
	\setlength{\tabcolsep}{7pt}
	\caption{A comparison of imperceptible adversarial examples with various adversarial attacks on CIFAR10.}
	\label{tab:result_v1_cifar10}
	\begin{tabular}{llll||ll}
		\toprule
		Method      & Steps & Epsilon & Accuracy        & Epsilon & Accuracy        \\ \midrule
		Natural     & -     & -       & 0.8892          & -       & -               \\
		FGSM        & -     & 0.3     & 0.4221          & 0.03    & 0.743           \\
		PGD         & 100   & 0.3     & \textbf{0.0049} & 0.03    & 0.4586          \\
		SinglePixel & -     & -       & 0.8814          & -       & 0.8818          \\
		CW          & 100   & 0.3     & 0.3604          & 0.03    & \textbf{0.3595} \\
		LBFGS       & 100   & 0.3     & 0.9361          & 0.03    & 0.9361          \\
		L2BIA       & 100   & 0.3     & 0.7739          & 0.03    & 0.8775          \\ \midrule
		Our         & 100   & 0.3     & \textbf{0.0303} & 0.03    & \textbf{0.4669} \\ \bottomrule
	\end{tabular}
\end{table}

\begin{table}[ht]
	\centering
	\renewcommand\arraystretch{1.0}
	\setlength{\tabcolsep}{14pt}
	\caption{A comparison of imperceptible adversarial examples with various adversarial attacks on ImageNet.}
	\label{tab:result_v1_imagenet}
	\begin{tabular}{lllll}
		\toprule
		Method      & Steps & Epsilon & Accuracy        &  \\ \midrule
		Natural     & -     & -       & 0.696           &  \\
		FGSM        & -     & 0.03    & 0.08            &  \\
		PGD         & 100   & 0.03    & \textbf{0.005}  &  \\
		SinglePixel & -     & -       & 0.6957          &  \\
		CW          & 100   & 0.03    & \textbf{0.4797} &  \\
		LBFGS       & 100   & 0.03    & 0.7929          &  \\ 
		L2BIA       & 100   & 0.03    & 0.692           &  \\ \midrule
		Our         & 100   & 0.03    & \textbf{0.4907} &  \\ \bottomrule
	\end{tabular}
\end{table}

In this subsection, we test the first attack method (e.g. Eq. \ref{eq:attack-v1}) on MNIST, CIFAR10 and ImageNet. Our tiny attack method is a novel adversarial attack method. We compare it with existed methods such as FGSM \cite{6_goodfellow2014explaining}, PGD \cite{24_madry2017towards}, SinglePixel \cite{13_su2019one}, CW \cite{10_carlini2017towards}, LBFGS \cite{2_szegedy2013intriguing} and L2BIA \cite{7_kurakin2016adversarial}. We utilize the LeNet \cite{45_lecun1998gradient}, ResNet18 \cite{42_he2016deep} and pre-trained VGG19 \cite{41_simonyan2014very} to test MNIST, CIFAR10 and ImageNet, respectively. We set $\epsilon$ as 0.3 when attacking on MNIST and 0.03 on ImageNet. Moreover, we set respectively $\alpha$ as 0.3 and 0.03 when attacking CIFAR10. For multi-step attacks, let the number of iterations uniformly to 100 steps.

In the best case, our attack method can drop the accuracy down to $59.95\%$ (Table \ref{tab:result_v1_mnist}) with $\epsilon$=0.3 on MNIST. Our attack method is second only PGD. For more challenging cases, such as imperceptible adversarial examples on CIFAR10, our method drop the accuracy down to $3.03\%$, with $\epsilon$=0.3 (Table \ref{tab:result_v1_cifar10}) which is second only PGD, and $46.69\%$ with $\epsilon$=0.03 (Table \ref{tab:result_v1_cifar10}) which is second only PGD and CW. Moreover, we test our tiny attack method on ImageNet with $\epsilon$=0.03. The adversary drops the accuracy down to $49.07\%$ (Table \ref{tab:result_v1_imagenet}), which is second only PGD and CW.

To validate the effectivity of our attack method, we repeat experiments by adjusting the $\alpha$ value of Eq. \ref{eq:attack-v1}. Fig. \ref{ex:v1_alpha} shows the effectivity of attack with different $\alpha$. It can be seen that the closer $\alpha$ is to zeros, the worse the performance of attack. Moreover, we find that when locating in [0,1], the attack achieves the best performance. As $\alpha$ gradually increases, the success of the attack first rises rapidly and then becomes stable.
\begin{figure}[ht]
	\centering
	\includegraphics[width=0.8\linewidth, height=3.1cm]{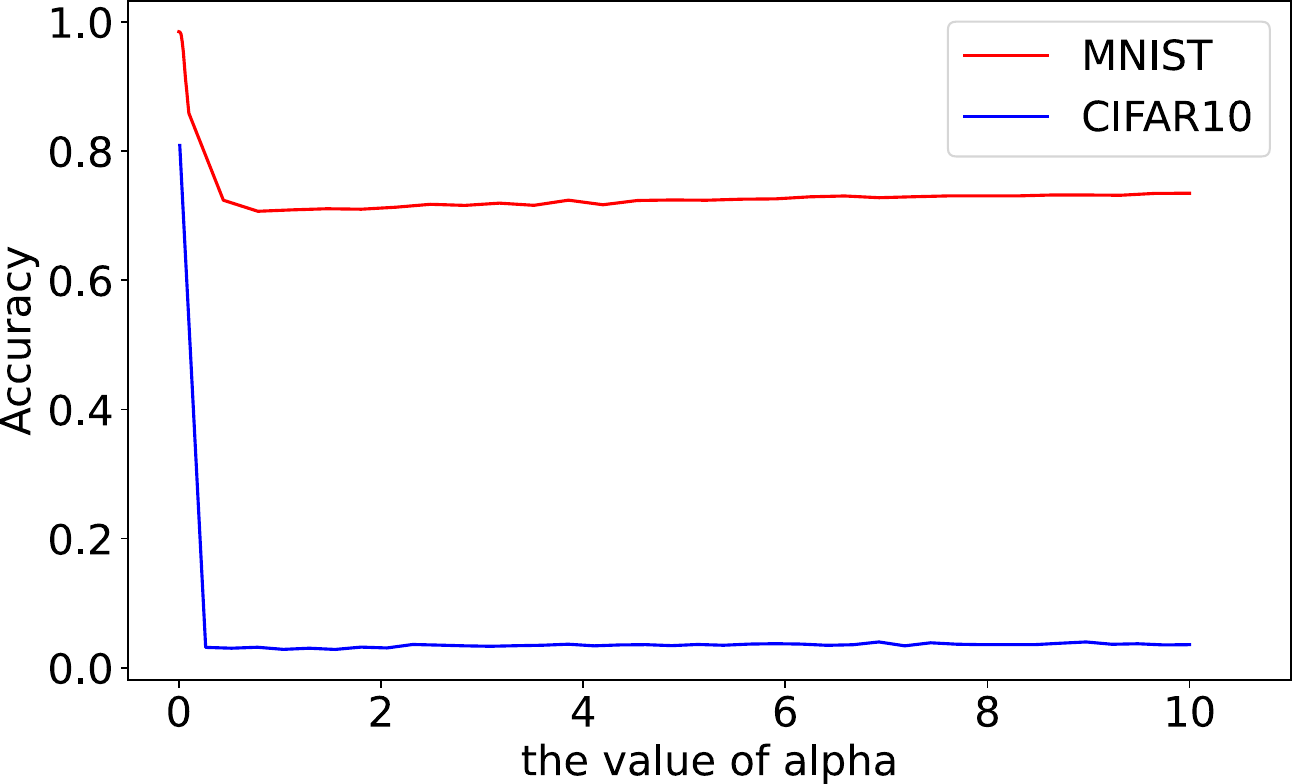}
	\caption{A trend of attack's power with various $\alpha$ values. We find that closer $\alpha$  to zero, worse performance of attack.}
	\label{ex:v1_alpha}
\end{figure}

\subsection{Unrecognizable Adversarial Examples}

In this subsection, we utilize Eq. \ref{eq:attack-v2} to create unrecognizable adversarial examples to fool VGG19 neural network, which is trained with ImageNet. We regard the images which came from ImageNet as an image from the domain (labelled as $I_{in}$). And any images which do not belong to $I_{in}$ are out of the domain (labelled as $I_{out}$). Our goal is creating some noise images and regular images which are classified by DNNs as a class with high confidence.

\paragraph{Irregular image} Firstly, we randomly generate a noise image. Then randomly choosing an image in the domain to guide the noise to close each other in \textit{Feature Space}. We find that it is easy to create such a noise image. And the DNN classifies it as a class with high confidence (Fig. \ref{overview_123}). 

\paragraph{regular image} Once again, we utilize Eq. \ref{eq:attack-v2} to generate unrecognizable adversarial examples. However, those are regular images rather than noise images. To find suitable images from out of domain, we utilize the Colored Brodatz Texture (CBT) database and the Multiband Texture (MBT) database \cite{26_abdelmounaime2013new}. CBT is a coloured version of the original 112 Brodatz grayscale texture images. And MBT is a collection of 154 colour images. The colour of the MBT images is mainly the result of inter-band and intra-band texture variation. Based on those images, we search unrecognizable adversarial examples in the out domain by minimizing the gay between the image from domain and the image from out domain in \textit{Feature Space}. Though those images are very different from that from ImageNet, the DNN still mostly believes that they are from the domain (Fig. \ref{overview_123}). 

\begin{figure}[]
	\centering
	\subfloat[]{
		\includegraphics[width=1.0\linewidth]{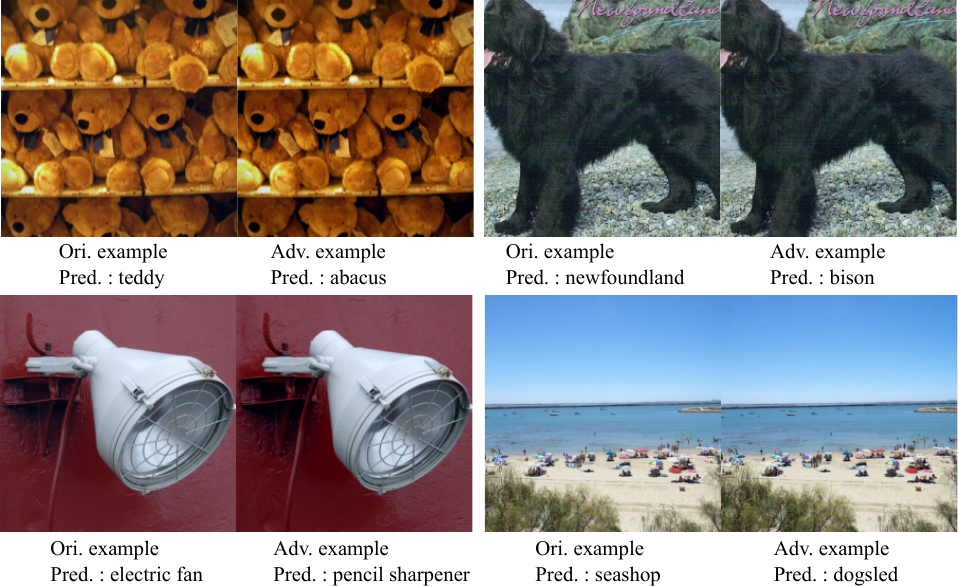}}
	
	\subfloat[]{
		\includegraphics[width=1.0\linewidth]{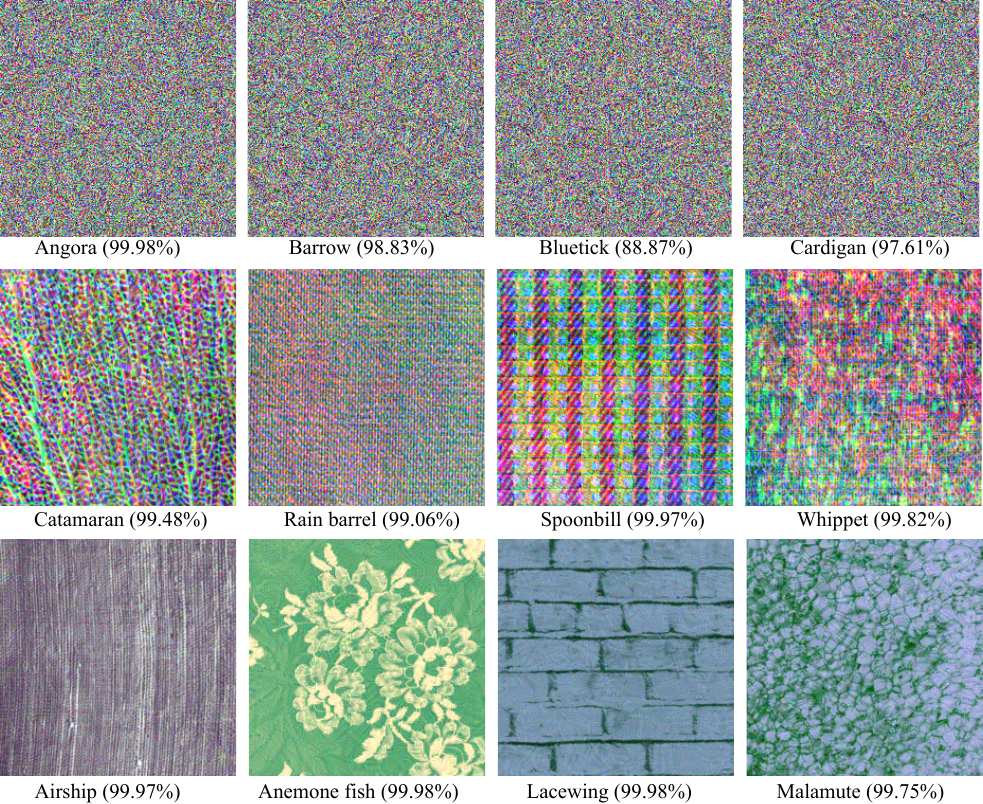}}
	
	\subfloat[]{
		\includegraphics[width=1.0\linewidth]{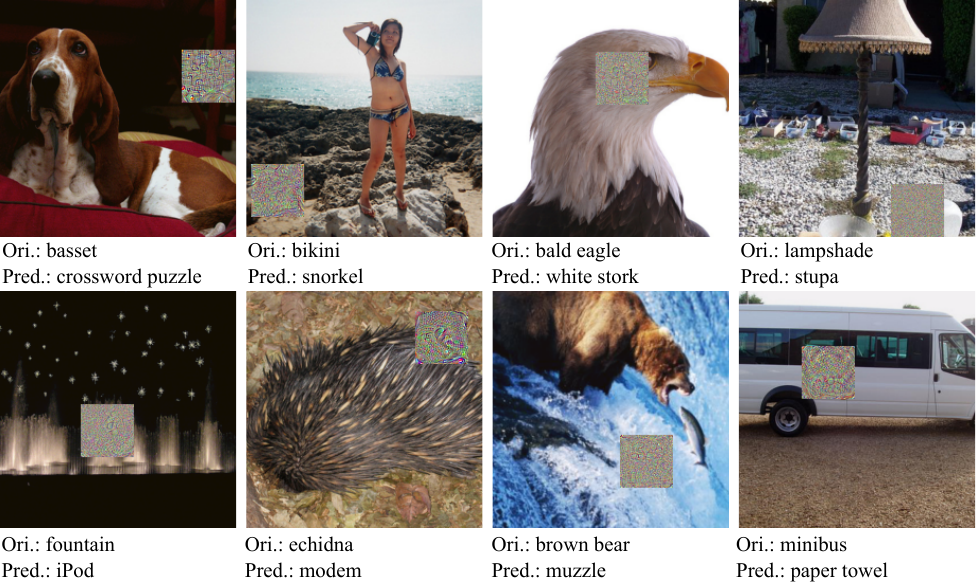}}

	\caption{There are three types of adversarial examples (e.g. imperceptible adversarial examples, unrecognizable adversarial examples and adversarial patch). (a) Some imperceptible adversarial examples are generated from ImageNet by the attack method of the first version (Eq. \ref{eq:attack-v1}). (b) Some unrecognizable adversarial examples are generated by the attack method of the second version (Eq. \ref{eq:attack-v2}). We utilize the noisy image and the image from the Colored Brodatz Texture (CBT) and the Multiband Texture (MBT) dataset \cite{26_abdelmounaime2013new} to produce the unrecognizable adversarial example which is classified a class with high confidence. (c) Some adversarial patch examples are generated by the attack method of the third version (Eq. \ref{eq:attack-v3}).}
	\label{overview_123}
\end{figure}
%
%

\subsection{Adversarial Patch}


In this subsection, we try to generate adversarial patches. Adopting the setting in \cite{16_brown2017adversarial}, we set the patch size as $q\times q$ where $q=\lfloor((Length\ of\ image)^2*c)^{0.5}\rfloor$. And $c$ is a constant. Because the image size in MNIST and CIFR10 datasets are relatively small, the generated adversarial patches are easy to modify the objective reality, so we only generate the adversarial patches to attack VGG19 network on ImageNet. The pixel of the patch can be manipulated freely. And the position of the patch is randomly located. Some examples are shown in Fig. \ref{overview_123}.

We show that our method can successfully generate adversarial patches. Since limiting facility, we randomly choose 1,000 images from validation dataset of ImageNet to generate their adversarial patches. Moreover, we test the accuracy in a different setting. The details are described in Fig. \ref{ex:v2_patch}. We find that our attack can effectively generate adversarial patches and is better than the Baseline \cite{16_brown2017adversarial} in most of the setting.
\begin{figure}[ht]
	\centering
	\includegraphics[width=0.8\linewidth, height=3.1cm]{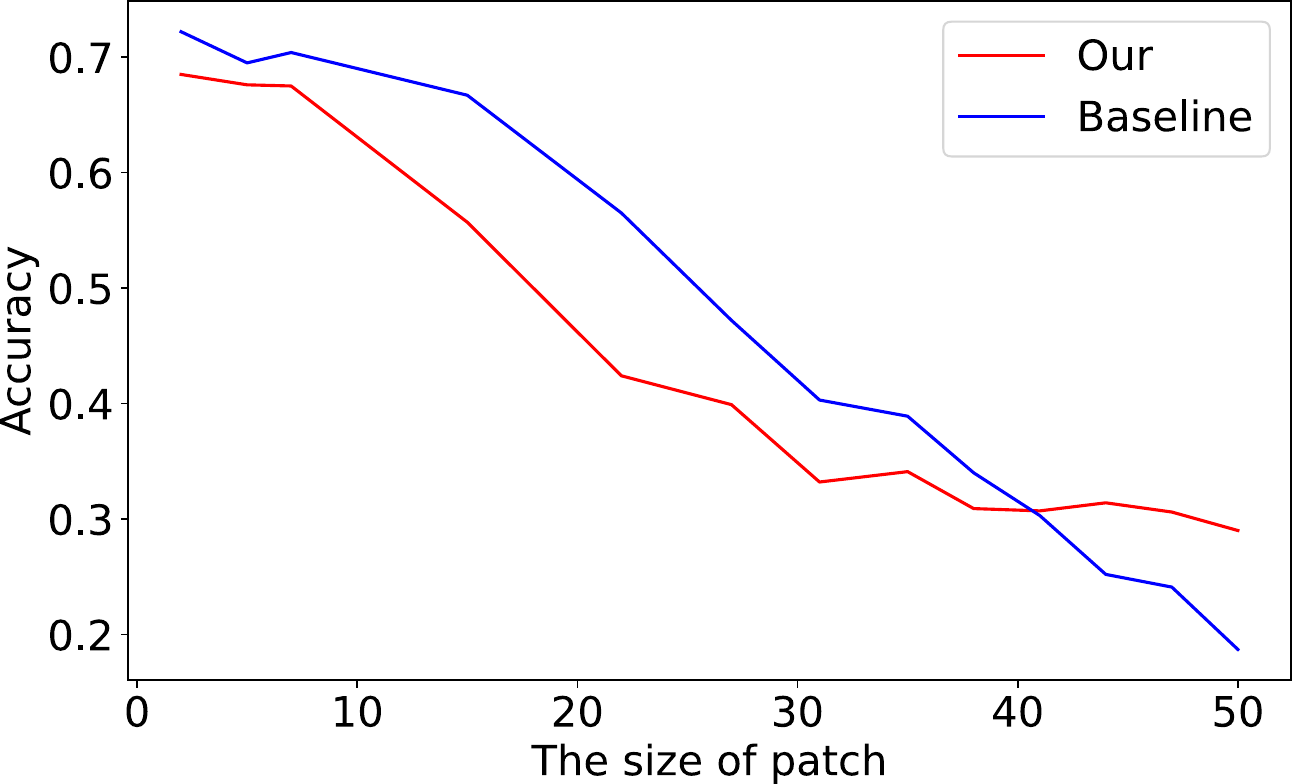}
	\caption{A trend of attack's power with various sizes of patch. We find that our attack can effectively generate adversarial patches and is better than the Baseline \cite{16_brown2017adversarial} in most of setting.}
	\label{ex:v2_patch}
\end{figure}


\section{Conclusion} \label{conclusion}

We show that, it is significantly different between human recognizes an image and DNNS to recognize the image. The influence on the prediction results depends on whether the original characteristic semantics are changed in the hidden layer under the condition that adversarial perturbation does not change objective reality. Therefore, based on this conjecture, incorporating hidden layer representation we propose a defence strategy and three types of adversarial attacks.

Our defence strategy achieves state-of-the-art performance and is less time-consuming than adversarial training. However, even if our defence method can defend against the attack, the improper setting will lead to the model’s accuracy decreased in the clean sample. Therefore, there is room for improvement in our defence in the future.
\section*{Acknowledgment}
This work was supported in part by Key Project of Natural Science Foundation of China (Grant 61732011), in part by the National Natural Science Foundation of China (Grants 61976141, 61772344 and 61732011), in part by the Natural Science Foundation of SZU (827-000230), and in part by the Interdisciplinary Innovation Team of Shenzhen University.


\ifCLASSOPTIONcaptionsoff
  \newpage
\fi



%
\bibliographystyle{IEEEtran}
\bibliography{reference}
\end{document}